\documentclass[letterpaper]{article} 
\usepackage{aaai26}  
\usepackage{times}  
\usepackage{helvet}  
\usepackage{courier}  
\usepackage[hyphens]{url}  
\usepackage{graphicx} 
\usepackage{natbib}  
\usepackage{caption} 
\usepackage{algorithm}
\usepackage{algorithmic}
\usepackage{amsmath}
\usepackage{amssymb}
\usepackage{amsthm}
\usepackage{booktabs}
\usepackage{multirow}
\usepackage{makecell}
\usepackage{xcolor}
\usepackage[T1]{fontenc}
\newtheorem{theorem}{Theorem}
\newtheorem{assumption}{Assumption}
\newtheorem{lemma}{Lemma}

\DeclareMathOperator*{\argmin}{arg\,min}
\usepackage{newfloat}
\usepackage{listings}
\DeclareCaptionStyle{ruled}{labelfont=normalfont,labelsep=colon,strut=off} 
\floatstyle{ruled}
\newfloat{listing}{tb}{lst}{}
\floatname{listing}{Listing}
\title{Bridging Dynamics Gaps via Diffusion Schrödinger Bridge for Cross-Domain Reinforcement Learning}
\author{
 Hanping Zhang, \quad Yuhong Guo
}
\affiliations{
    School of Computer Science, Carleton University, Ottawa, Canada	

}

\begin{document}

\maketitle

\begin{abstract}
Cross-domain reinforcement learning (RL) aims to learn transferable policies under dynamics shifts between source and target domains. 
A key challenge lies in the lack of target-domain environment 
	interaction and reward supervision, 
which prevents direct policy learning. 
To address this challenge, we propose Bridging Dynamics Gaps for Cross-Domain Reinforcement Learning (BDGxRL), a novel framework that leverages Diffusion Schrödinger Bridge (DSB) 
	to align source transitions with target-domain dynamics encoded in 
	offline demonstrations. 
Moreover, we introduce a reward modulation mechanism that estimates rewards based on state transitions, 
applying to DSB-aligned samples to ensure consistency between rewards and target-domain dynamics. 
BDGxRL performs target-oriented policy learning entirely within the source domain, without 
	access to the target environment or its rewards. 
Experiments on MuJoCo cross-domain benchmarks demonstrate that BDGxRL outperforms state-of-the-art baselines and shows strong adaptability under transition dynamics shifts.
\end{abstract}

\section{Introduction}
Reinforcement learning (RL), 
a cornerstone of modern sequential decision-making,
has been successfully applied across a wide range of domains, including robotics~\citep{kober2013reinforcement}, game playing~\citep{berner2019dota}, recommender systems~\citep{afsar2022reinforcement}, and autonomous driving~\citep{aradi2020survey}. By interacting with an environment to maximize long-term cumulative rewards, RL agents can learn complex policies from experience. However, directly training RL agents in real-world scenarios is often impractical due to limitations in data availability, environment accessibility, and retraining flexibility. This challenge is particularly evident in cross-domain settings, 
such as transferring policies between simulation and the real world, or adapting quickly between related tasks or environments. Bridging 
the cross-domain gap is critical to improving the real-world applicability of RL.

To tackle this issue, cross-domain adaptation in RL has attracted increasing attention. In such settings, agents are typically trained in a source domain (e.g., a simulator) and then deployed in a target domain (e.g., the real world). In many real-world applications, the source and target domains often share the same state and action spaces, since the simulated agents are typically designed to mimic their physical counterparts. However, the transition dynamics between the two domains often differ due to subtle mismatches in physical properties such as gravity, mass, or friction. These discrepancies, known as dynamics gaps, can result in significant degradation in policy performance when transferring from simulation to reality.

Addressing these dynamics gaps is non-trivial, especially when direct interaction with the real-world environment is unavailable or restricted. A promising alternative is to collect limited offline expert demonstrations from the target domain. However, these demonstrations are typically sparse and lack reward annotations, making it difficult to directly apply conventional RL objectives. Moreover, reusing the reward function from the source domain is often inappropriate, as the dynamics gap may induce a mismatch between the source and target reward realizations, even when the nominal reward function remains unchanged. This reward inconsistency further complicates the challenge of learning transferable policies. As a result, effective cross-domain adaptation methods must address both the divergence in transition dynamics and the scarcity of explicit supervision in the target domain, including the absence of reward signals.

To bridge
the dynamics gap in cross-domain RL, we consider the Diffusion Schrödinger Bridge (DSB)~\citep{de2021diffusion}, a recently developed probabilistic framework for distribution alignment. DSB formulates the interpolation between two probability distributions as a stochastic optimal transport (OT) problem, solved via a time-dependent diffusion process. Unlike traditional OT~\citep{peyre2019computational}, which often requires explicit coupling or cost functions, DSB learns a continuous flow that transforms samples from a source distribution to a target distribution without requiring paired data.
Originally applied in generative modeling tasks such as unpaired image translation and distribution morphing, DSB has shown strong capabilities in capturing complex structural differences between domains. 

Motivated by DSB's
strengths, we explore its potential in cross-domain RL for transition dynamics adaptation. Specifically, by learning a DSB process that transports source-domain transitions to match the distribution of offline expert transitions in the target domain, we can generate aligned trajectories that better reflect the target environment. This alignment is achieved without requiring online interactions or reward annotations in the target domain, making DSB particularly suitable for offline cross-domain RL settings where supervision is limited.

To address the overall challenge of cross-domain adaptation in RL, 
we propose a novel framework named Bridging Dynamics Gaps for Cross-Domain Reinforcement Learning (BDGxRL), 
which leverages DSB to tackle dynamics gaps without requiring access to the target environment. Our method consists of three key components: First, we learn a DSB-based transition bridge using only offline expert demonstrations from the target domain and trajectories sampled from the online source environment. 
This enables the adaptation of transition dynamics in a purely offline manner. 
Second, we perform reward modulation by training an action-independent reward model on the source domain, which estimates rewards solely based on the state transition pairs. 
This reward model is 
modulated using the DSB-transformed transitions to the target domain, 
mitigating reward mismatch caused by dynamics differences 
in the absence of target-domain rewards. 
Third, the learned transition DSB is used to transform source-domain trajectories into target-style transitions, upon which the modulated reward is applied. This allows us to learn a target-oriented policy entirely within the source domain, without any online interaction or direct data collection with rewards from the target environment. We evaluate BDGxRL on a set of cross-domain benchmarks built upon the MuJoCo simulation suite~\citep{todorov2012mujoco}, and compare it with several state-of-the-art cross-domain RL methods. Experimental results demonstrate that BDGxRL consistently outperforms existing baselines, achieving superior adaptability 
under challenging dynamics shifts.
Our contributions are summarized as follows:
\begin{itemize}
\item We propose BDGxRL, a novel framework that enables learning a target-oriented policy 
	in the source environment,
	addressing dynamics gaps 
		in cross-domain RL. 
\item We are the first to introduce the DSB into cross-domain RL, enabling transition dynamics transformation from the source domain to align with the target domain.
\item We identify 
	that changes in transition dynamics can induce inconsistencies in reward functions. 
		To mitigate this, we introduce a reward modulation mechanism that supplements missing target-domain rewards.
\item Experimental results on multiple physics-based simulation benchmarks demonstrate that BDGxRL consistently outperforms state-of-the-art cross-domain RL baselines. 
\end{itemize}
\section{Related Work}
\subsection{Cross-Domain Reinforcement Learning}
Cross-domain RL 
dates back to the early applications of domain adaptation (DA) in RL~\citep{lazaric2012transfer}. With the rise of deep reinforcement learning, \citet{higgins2017darla} were among the first to explicitly introduce domain adaptation to RL. 
They systematically identified the key factors that may differ across domains: observations, actions, rewards, and transitions, and proposed the DisentAngled Representation Learning Agent (DARLA) to handle such domain shifts. 
Building upon DARLA, \citet{xing2021domain} proposed Latent Unified State Representation (LUSR), which learns a shared latent space across domains to mitigate observation-level discrepancies, enabling policy transfer without explicitly addressing dynamics mismatch.

A significant shift in focus came with \citet{eysenbach2020off}, who introduced Domain Adaptation with Rewards from Classifiers (DARC), 
a modern approach to consider the off-dynamics setting.
DARC trains a classifier to distinguish source-domain transitions from target-domain ones, and modifies the reward function to penalize source-domain transitions. 
\citet{liu2022dara} extended this idea to the offline setting with Dynamics-Aware Reward Augmentation (DARA). Instead of distinguishing transitions based solely on rewards, DARA learns a classifier to identify domain discrepancies over full $(s,a,s')$ tuples and modifies rewards to penalize transitions that do not appear plausible in the target domain.
Further advancing this line of work, \citet{guo2024off} proposed Domain Adaptation and Reward-Augmented Imitation Learning (DARAIL), which incorporates adversarial imitation learning from observations to align source and target behaviors. They also introduce a Reward-Augmented Estimator (RAE) to refine reward modification based on inferred dynamics mismatch.
More recently, \citet{niu2024xted} proposed Cross-Domain Trajectory EDiting (xTED), which employs a diffusion model to transform source domain trajectories into target-like ones. By injecting and subsequently denoising noise, xTED learns to generate plausible target-domain trajectories for policy training.
\citet{lyu2024cross} introduced 
a cross-domain policy adaptation method by capturing representation mismatch,
which learns two encoders: one for state-action pairs and one for next states, trained exclusively on the target domain. 
Transitions with large representation deviations under these encoders
are penalized via reward modification, effectively filtering out implausible transitions. 
All these methods share the off-dynamics setting, where the dynamics differ between domains, and source domain data/environment is abundant while access to the target domain is limited. 
Our work focuses 
on learning a target-oriented policy entirely from an online source domain, 
assuming only limited demonstrations are available from the target domain.

Several recent works adopt related but slightly different settings. For example, some assume either offline source or offline target datasets while still requiring limited target domain interactions, rather than purely learning from the source domain.
\citet{xu2023cross} focus on the online source domain 
but allow a few online interactions in the target domain. They propose Value-Guided Data Filtering (VGDF), which leverages value function consistency across domains to select source domain transitions whose estimated values 
closely match those in the target domain.
\citet{niu2022trust} study a Sim2Real setting where both limited real-world data and an imperfect simulator are available. Their proposed Dynamics-Aware Hybrid Offline-and-Online Reinforcement Learning (H2O) framework jointly learns from real-world offline data and simulator-based online rollouts. H2O adaptively penalizes simulated transitions with large dynamics gaps through a dynamics-aware Q-function adjustment. 
\subsection{Diffusion Schrödinger Bridge}
Diffusion models~\citep{ho2020denoising, song2020denoising} have recently gained widespread recognition for their powerful generative capabilities 
in domains such as image generation~\citep{rombach2022high}. 
In addition, 
their conditional variants~\citep{song2020score, zhang2023text} enable controlled generation with flexible conditioning inputs.
Moreover, 
the particularly important challenge of data translation, especially in scenarios involving two large unpaired datasets, necessitates extending diffusion models beyond unconditional generation.

The Optimal Transport (OT) problem~\citep{peyre2019computational, santambrogio2015optimal}, which seeks the most efficient way to map one probability distribution to another, provides a fundamental framework for addressing data translation. 
\citet{li2023dpm} integrate OT theory into the diffusion process by computing semi-discrete OT maps during denoising, thereby improving both the sampling efficiency and quality.
To address unpaired or partially paired datasets, 
\citet{gu2023optimal} introduce optimal transport-guided conditional score-based diffusion models.
Their method couples unpaired data distributions using OT and integrates this coupling into 
conditional score-based diffusion models,
achieving strong image translation.

\citet{de2021diffusion} were the first to apply the concept of the Schrödinger Bridge (SB)~\citep{leonard2013survey} to diffusion models, offering a probabilistic framework to solve OT problems by finding the optimal stochastic process, e.g., Brownian motion, connecting two distributions. They introduce the Diffusion Schrödinger Bridge (DSB), enabling data translation between distinct domains, and solve it via Iterative Proportional Fitting (IPF).
Subsequently, \citet{shi2023diffusion} propose Iterative Markovian Fitting (IMF) and the associated Diffusion Schrödinger Bridge Matching (DSBM) algorithm to efficiently compute the SB solution.
Building on these, \citet{de2024schrodinger} introduce improved variants: $\alpha$-IMF and $\alpha$-DSBM, designed for more challenging unpaired data translation tasks, achieving state-of-the-art performance. \citet{somnath2023aligned} propose SBALIGN, which leverages aligned datasets 
to solve DSB more efficiently. Meanwhile, \citet{kim2024fast} introduce Data Ensemble (DE), a method for enhancing the neural backbone of DSB.
Despite DSB’s growing success in unpaired image data translation, its potential applications in other areas remain underexplored. In this work, we investigate the use of DSB 
to align the transition dynamics between the source and target domains, facilitating 
effective policy transfer across environments with mismatched dynamics.

\begin{figure*}[t]
\centering
\includegraphics[width=0.80\textwidth]{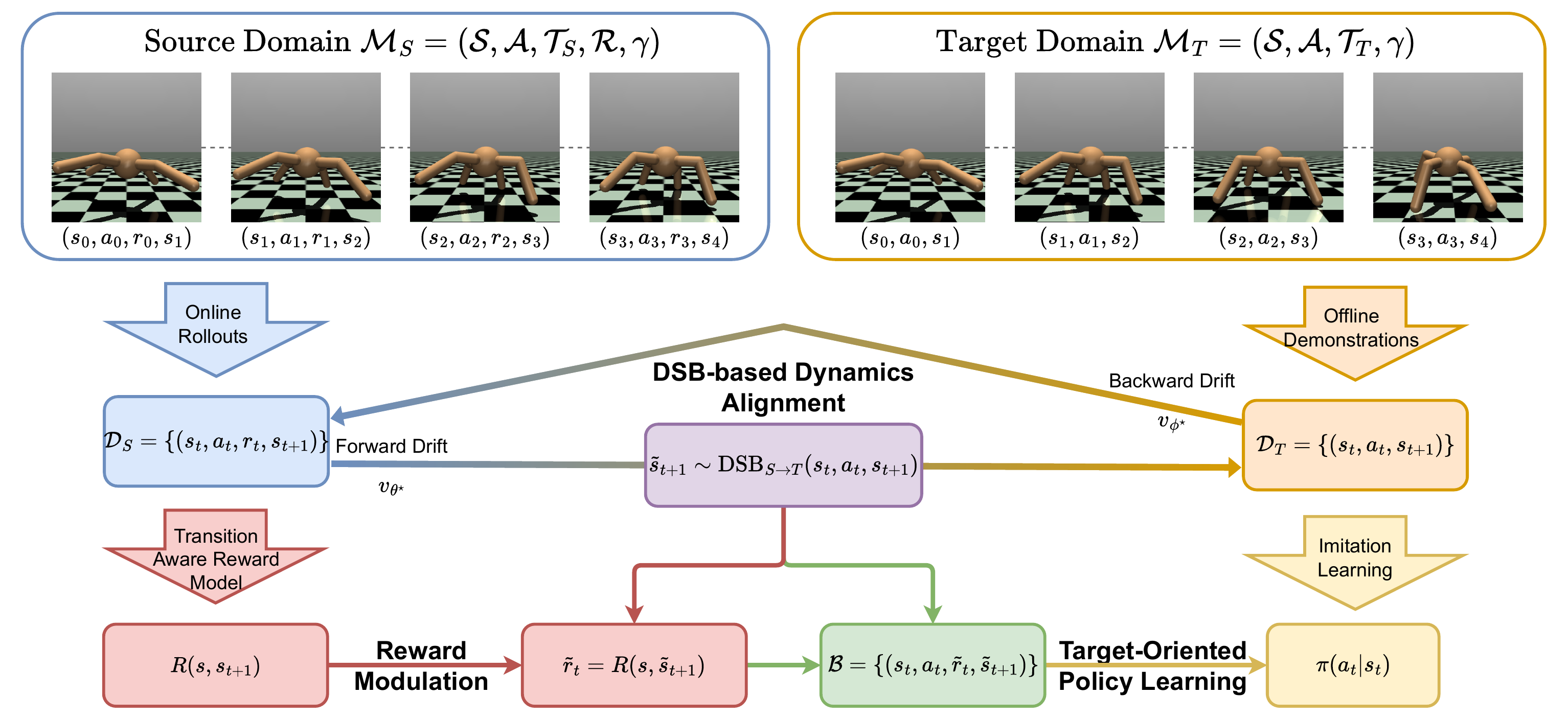}
\caption{
Overview of the proposed BDGxRL framework. The agent first collects a dataset $\mathcal{D}_S$ from the source environment,
which is used to train a transition-aware reward model $R(s_t,s_{t+1})$. Together with offline target demonstrations $\mathcal{D}_T$, it also trains a DSB model for dynamics alignment. During online interactions, source transitions are translated into target-style transitions via $\tilde{s}_{t+1}\sim\mathrm{DSB}(s_t,a_t,s_{t+1})$ to mitigate dynamics mismatch. The modulated reward $\tilde{r}_t=R(s_t,\tilde{s}_{t+1})$ is then used to learn a target-oriented policy entirely within the source domain, initialized via imitation from $\mathcal{D}_T$.
}
\label{fig:model}
\end{figure*}
%
\section{Preliminaries}
In this section, we provide a brief 
introduction
on the Diffusion Schrödinger Bridge (DSB) problem and its practical realization via the Iterative Markov Fitting (IMF) procedure. These tools are foundational to our approach for translating transition dynamics across domains.
\paragraph{Diffusion Schrödinger Bridge} 
The Schrödinger Bridge (SB) problem seeks a stochastic process that transports an initial distribution $\Pi_0$ to a terminal distribution $\Pi_1$ over time, while remaining as close as possible, in the sense of Kullback–Leibler (KL) divergence, to a given reference process $\mathbb{Q}$, typically a Brownian motion.

In Diffusion Schrödinger Bridge (DSB), the reference process $\mathbb{Q}$ is taken to be a diffusion process with known time-forward dynamics. Given marginals $\Pi_0$ and $\Pi_1$, the goal is to find an optimal path measure $\mathbb{P}^\star$ that minimizes the KL divergence from the reference diffusion process:
\begin{equation}
\mathbb{P}^* = \argmin_{\mathbb{P} \in \mathcal{M}(\Pi_0,\Pi_1)} 
\left\{ \mathrm{KL}(\mathbb{P} \,\|\, \mathbb{Q}) 
\;\middle|\; \mathbb{P}_0 = \Pi_0,\; \mathbb{P}_1 = \Pi_1 \right\}
\end{equation}
where $\mathcal{M}(\Pi_0,\Pi_1)$ denotes the set of Markov path measures with the prescribed initial and terminal marginals. Once the optimal bridge $\mathbb{P}^\star$ is obtained, it defines a stochastic transformation that couples $\Pi_0$ and $\Pi_1$, allowing us to effectively translate between the two distributions.
\paragraph{Iterative Markov Fitting}
Iterative Markov Fitting (IMF)~\citep{shi2023diffusion} 
is an efficient 
procedure for solving the SB problem. IMF introduces two alternating projection operators: the Markov projection $\mathrm{proj}_{\mathcal{M}}$~\citep{gyongy1986mimicking} and the reciprocal projection $\mathrm{proj}_{\mathcal{R}(\mathbb{Q})}$~\citep{leonard2014reciprocal}. Starting from an initial path measure, these projections are applied iteratively to generate a sequence of path measures $\mathbb{P}^n$ as follows:
\begin{equation}
\mathbb{P}^{2n+1} = \mathrm{proj}_{\mathcal{M}}(\mathbb{P}^{2n}), \quad
\mathbb{P}^{2n+2} = \mathrm{proj}_{\mathcal{R}(\mathbb{Q})}(\mathbb{P}^{2n+1}).
\end{equation}
This alternating procedure ensures that at every even step $2n$, the path measure $\mathbb{P}^{2n}$ satisfies the marginal constraints $\mathbb{P}^{2n}_0 = \Pi_0$ and $\mathbb{P}^{2n}_1 = \Pi_1$. With sufficient iterations, the sequence $\mathbb{P}^n$ converges to the optimal solution $\mathbb{P}^\star$ to the Schrödinger Bridge problem:
$\lim_{n\to\infty}\mathrm{KL}(\mathbb{P}^n \,\|\, \mathbb{P}^\star)=0.$
\section{Method}
\paragraph{Problem Setting}
Reinforcement learning (RL) is typically formulated as a Markov Decision Process (MDP), defined by the tuple $\mathcal{M} = (\mathcal{S}, \mathcal{A}, \mathcal{T}, \mathcal{R}, \gamma)$, where $\mathcal{S}$ is the state space, $\mathcal{A}$ is the action space, 
$\mathcal{T} : \mathcal{S} \times \mathcal{A} \rightarrow \mathcal{P}(\mathcal{S})$ 
represents the transition dynamics, $\mathcal{R} : \mathcal{S} \times \mathcal{A} \rightarrow \mathbb{R}$ is the reward function, and $\gamma \in (0, 1)$ is the discount factor.
The cross-domain RL setting involves two related MDPs: a source domain $\mathcal{M}_S$ and a target domain $\mathcal{M}_T$. These two domains are assumed to share the common state and action spaces, i.e., $\mathcal{S}_S = \mathcal{S}_T$ and $\mathcal{A}_S = \mathcal{A}_T$, but differ in their transition dynamics, with $\mathcal{T}_S \ne \mathcal{T}_T$, resulting in a dynamics gap. In the source domain $\mathcal{M}_S$, we assume access to an online environment that allows unlimited interactions and provides reward signals. This enables the agent to freely sample transitions $(s_t, a_t, r_t, s_{t+1})$ during training. In contrast, the target domain $\mathcal{M}_T$ is offline and reward-free. Specifically, we are only given a static 
dataset of offline trajectories $\mathcal{D}_T = \{(s_t, a_t, s_{t+1})\}$ collected in the target domain, without access to either interactive environment or ground-truth rewards.
The objective is to learn a policy $\pi(a|s)$ that performs well under the transition dynamics of $\mathcal{M}_T$, using only online data from the source domain and the offline dataset $\mathcal{D}_T$ from the target domain. This setting captures practical scenarios where the target domain is accessible only through offline expert demonstrations, and the reward function is difficult to specify or unavailable.

To address dynamics gaps across domains in this problem setting, 
we present a novel BDGxRL framework for target-oriented policy learning.  
It aligns the transition dynamics of the source and target domains based on 
the Diffusion Schrödinger Bridge (DSB). 
As the reward function in the target domain is unavailable, 
a reward modulation strategy is designed to adapt the reward model trained on the source domain to estimate rewards for DSB-aligned transitions. 
Building on dynamics alignment and reward modulation, 
the target-oriented policy learning framework is developed to enable policy optimization for the target domain 
in the source online environment, 
without requiring access to the target environment or rewards. 
We also provide a theoretical analysis on the performance bound of the learned target policy. 
An overview of our BDGxRL framework is presented in Figure~\ref{fig:model}.
\subsection{DSB-based Dynamics Alignment}
Bridging the dynamics gap between the source and target domains is particularly challenging in the absence of online interaction with the target environment. 
Given access only to a limited set of offline expert demonstrations from the target domain, denoted as $\mathcal{D}_T = \{(s_t, a_t, s_{t+1})\}$, 
while it is possible to learn a transition model $\mathcal{T}_T$ from $\mathcal{D}_T$, 
aligning or adapting the source-domain transition dynamics $\mathcal{T}_S$ to the target-domain dynamics remains difficult. The lack of paired transitions or any shared supervision between the domains further complicates this alignment task and renders direct policy transfer infeasible.

To address this problem, we draw inspiration from recent advances in unpaired data translation using the Diffusion Schrödinger Bridge (DSB), which has proven effective in tasks such as image domain adaptation. In our context, we formulate the source-domain transition dynamics as the source marginal distribution $\Pi_0 = \mathcal{T}_S(s_{t+1}|s_t, a_t)$ and the target-domain transition dynamics as the target marginal distribution $\Pi_1 = \mathcal{T}_T(s_{t+1}|s_t, a_t)$. Our goal is to learn a DSB that aligns these two distributions, thereby enabling 
one to translate transitions sampled from the source dynamics into transitions that are consistent with the target dynamics. Once this mapping is established, we can predict the 
 target-domain next state $s_{t+1}$ 
for any given source-domain state-action pair $(s_t, a_t)$ by translating the corresponding source transition to the target domain.
\paragraph{Transition Representation}
To model transition, 
we collect rollouts from the source environment to form a dataset $\mathcal{D}_S$. We then sample transitions $\mathbf{p}^S \sim \mathcal{D}_S$, where each transition represents a conditional distribution $p^S(s_{t+1} \mid s_t, a_t)$, and similarly, sample transitions $\mathbf{p}^T \sim \mathcal{D}_T$ from the target demonstrations. Each transition is represented as a concatenated vector of state, action, and next state:
\begin{equation}
\mathbf{p} = \begin{bmatrix}
s_t; & a_t; & s_{t+1}
\end{bmatrix}.
\end{equation}
This unified transition format allows us to treat transition samples as points in a shared representation space, enabling the application of DSB to learn a translation between source and target transition distributions.

\paragraph{DSB Training} 
To train the DSB model for transition dynamics alignment, we follow the Iterative Markov Fitting (IMF) procedure~\citep{shi2023diffusion} 
and adopt Brownian motion $\mathbb{B}$ as the reference process to simplify the formulation by avoiding additional drift terms. The objective is to learn two optimal drift functions, $v_{\theta^\star}$ and $v_{\phi^\star}$, corresponding to the forward and backward velocity fields that transport the source-domain transition distribution $\Pi_0$ to the target-domain distribution $\Pi_1$, and vice versa.

\paragraph{Forward Process}
In the forward direction, the process starts from a source-domain sample 
$\mathbf{p}_0^S \sim \Pi_0$ and evolves according to the following stochastic differential equation (SDE):
\begin{equation}
\mathrm{d}\mathbf{p}_k^S = v_{\theta^\star}(k, \mathbf{p}_k^S)\,\mathrm{d}k + \sigma_0\,\mathrm{d}\mathbf{B}_k,
\label{eq:forward}
\end{equation}
where $\sigma_0$ denotes a fixed diffusion magnitude and 
$\mathbf{B}_k$ is a standard Brownian motion with the same dimension as $\mathbf{p}_0^S$.
Since Brownian motion is a Gaussian process, conditioning it on fixed endpoints results in 
a Brownian bridge whose conditional expectation evolves linearly in time. 
In the IMF framework, we construct such Brownian bridges between sampled endpoint pairs 
and use their drift as regression targets. Specifically, the forward drift function 
$v_\theta$ is trained to approximate the drift of these endpoint-conditioned Brownian bridges. 
The parameters $\theta$ is obtained by solving
\begin{align}
\theta^\star = \argmin_\theta\! \int_0^1 \!\!\mathbb{E}_{\Pi_{k,1}}\!\!
\left[ \left\|
\frac{\mathbf{p}_1\!-\!\mathbf{p}_k^+}{1\!-\!k}
\!-\! v_\theta(k, \mathbf{p}_k^+)
\right\|^2 \right] \mathrm{d}k,
\label{eq:forward_opt}
\end{align}
where $\mathbf{p}_k^+$ denotes an intermediate sample drawn from a reference Brownian bridge conditioned on endpoints $(\hat{\mathbf{p}}_0, \mathbf{p}_1)$.
Here, $\mathbf{p}_1\sim\Pi_1$ is a real target-domain sample, 
and $\hat{\mathbf{p}}_0$ is the source-domain endpoint generated by the backward process from a previous backward iteration.
Under this reference bridge, $\mathbf{p}_k^+$ admits the closed-form interpolation
\begin{equation}
\mathbf{p}_k^{+} = (1-k)\hat{\mathbf{p}}_0 + k\mathbf{p}_1 + \sigma_0 \sqrt{k(1-k)}\ \mathbf{z},
\label{eq:interp_forward}
\end{equation}
where $\mathbf{z}\sim\mathcal{N}(0,I)$. 
\paragraph{Backward Process}
Similarly, the backward process starts from a target-domain sample 
$\mathbf{p}_0^T\sim\Pi_1$ and evolves backward in time according to
\begin{equation}
\mathrm{d}\mathbf{p}_k^T = v_{\phi^\star}(1-k, \mathbf{p}_k^T)\,\mathrm{d}k + \sigma_0\,\mathrm{d}\mathbf{B}_k.
\label{eq:backward}
\end{equation}
The forward and backward SDEs correspond to two time-direction parameterizations of the same Schrödinger bridge process. 
Under the Schrödinger bridge law, their intermediate states satisfy the distributional identity
$\mathbf{p}_k^S \stackrel{d}{=} \mathbf{p}_{1-k}^T$.
Accordingly, the backward drift function $v_\phi$
is trained by regressing against the drift of Brownian bridges constructed between sampled endpoint pairs 
as follows:
\begin{align}
	\!\!\!
\phi^\star \!=\! \argmin_\phi \! \int_0^1 
\!\!\!\mathbb{E}_{\Pi_{0,k}}\!\!
\left[ \left\|
\frac{\mathbf{p}_0\!-\!\mathbf{p}_k^-}{k}
\!-\! v_\phi(k, \mathbf{p}_k^-)
\right\|^2 \right]\! \mathrm{d}k,
\label{eq:backward_opt}
\end{align}
where $\mathbf{p}_k^-$ is sampled from a reference Brownian bridge conditioned on 
endpoints $(\mathbf{p}_0, \hat{\mathbf{p}}_1)$, with $\mathbf{p}_0 \sim \Pi_0$ and 
$\hat{\mathbf{p}}_1$ generated by the forward process from a previous forward iteration. 
Under this reference bridge, the intermediate sample admits the closed-form expression
\begin{align}
\mathbf{p}_k^{-} =(1-k)\mathbf{p}_0 + k\hat{\mathbf{p}}_1 + \sigma_0 \sqrt{k(1-k)}\ \mathbf{z}.
\label{eq:interp_backward}
\end{align}

By alternately optimizing  $\theta$ and $\phi$, 
we obtain forward and backward drift functions that jointly approximate the optimal 
Schrödinger bridge between between $\Pi_0$ and $\Pi_1$, 
which we subsequently use to perform transition dynamics alignment between the two domains.

\paragraph{Dynamics Alignment}
Once the drift functions are learned, we can translate source-domain transitions into target-domain transitions 
by deploying the forward process. 
However, solving the continuous-time SDEs exactly is typically intractable. 
In practice, we approximate the diffusion process using a discretized Langevin-like scheme, commonly the Euler–Maruyama method~\citep{shi2023diffusion,song2020score}. 
This allows us to denoise
over $K$ discrete steps with fixed time interval $\Delta k=\frac{1}{K}$. At each timestep $k$, the transition vector is updated as:
\begin{equation}
	\mathbf{p}_{k+1} = \mathbf{p}_k + v_{\theta^\star}(k\cdot\Delta k, \mathbf{p}_k)\Delta k 
	+ \sigma_0\sqrt{\Delta k}\boldsymbol{\xi}_k,
\end{equation}
where $\boldsymbol{\xi}_k\sim\mathcal{N}(0,I)$.
After $K$ steps, we obtain the translated transition $\mathbf{p}^T\approx\mathbf{p}_K$ that mimics the target domain. A similar procedure can be applied in the reverse direction using the backward drift function $v_{\phi^\star}$.

This procedure enables us to translate 
source-domain transitions into target-style transitions by sampling from the learned bridge distribution $\mathbb{P}^\star$, without 
access to the target environment. 
For notational convenience, we define $\mathrm{DSB}_{S\to T}$ as the forward translation operator and $\mathrm{DSB}_{T\to S}$ as the reverse translation operator. Thus, given a source transition $\mathbf{p}^S=(s_t,a_t,s_{t+1})$, we can generate the corresponding target-style next state $\tilde{s}_{t+1}$ via forward translation:
\begin{equation}
\tilde{s}_{t+1} \sim \mathrm{DSB}_{S \to T}(s_t, a_t, s_{t+1}).
\end{equation}
\subsection{Reward Modulation}
With the DSB-aligned transition dynamics, we are able to obtain target-style transitions to mitigate the dynamics gap between the source and target domains. However, learning a target-domain policy remains infeasible without a reward function from the target environment, as the offline target dataset $\mathcal{D}_T$ does not contain any reward information.

In modern RL literature, rewards are often modeled as a function of the state-action pair, $R(s_t,a_t)$. 
A straightforward approach is to reuse the reward function from the source environment. 
However, in many settings, particularly those involving goal-reaching or control tasks, the reward is more naturally tied to the outcome of the transition, i.e., the next state $s_{t+1}$. In fact, some formulations treat the reward as a function of $s_{t+1}$ alone~\citep{sutton1998reinforcement}. Consequently, applying the source reward function directly in the target domain can be misleading due to the misalignment in transition dynamics.

\paragraph{Transition-Aware Reward} 
To address the problem, we propose a transition-aware reward function $R(s_t,s_{t+1})$, which depends on both the current state and the resulting next state, but not on the action taken. This formulation captures the essence of transition outcome and is agnostic to the specific control signal that causes the transition.

We train the transition-aware reward model using the 
dataset $\mathcal{D}_S=\{(s_t,s_{t+1},r_t)\}$
collected from the source domain environment, 
which includes both transitions and ground-truth rewards. 
The reward model is learned by minimizing the following mean squared error (MSE):
\begin{equation}
\mathcal{L}_{\text{reward}} = \mathbb{E}_{(s_t, s_{t+1}, r_t) \sim \mathcal{D}_S} \left[ \left(R(s_t, s_{t+1}) - r_t \right)^2 \right].
\label{eq:reward}
\end{equation}
Afterward, we can apply this reward model to the translated target-style transitions
to produce target-oriented rewards. 
Specifically, for each source transition $(s_t, a_t, s_{t+1}, r_t)$, we first compute the target-aligned next state $\tilde{s}_{t+1}\sim \mathrm{DSB}_{S\to T}(s_{t+1}|s_t, a_t)$, and then modulate the reward as:
\begin{align}
\tilde{r}_t = R(s_t, \tilde{s}_{t+1}).
\end{align}
This reward modulation mechanism ensures consistency between the reward and the target-domain dynamics, while relying solely on source-domain interactions and supervision.


\subsection{Target-Oriented Policy Learning}
In cross-domain RL, the offline target dataset $\mathcal{D}_T$ is typically both limited in size and lacks reward supervision, making it unsuitable for direct policy optimization. To overcome this challenge, 
our proposed BDGxRL framework  
leverages modulated source-environment interactions to learn a target-oriented policy, based on DSB-driven dynamics alignment and transition-aware reward modulation. 
The goal is to train a policy solely within the source domain that generalizes well to the target environment, despite the dynamics mismatch and absence of target rewards.

Specifically, our proposed BDGxRL framework has two two phases--offline pretraining phase and online policy learning phase. 
During offline pretraining, 
we first collect a dataset $\mathcal{D}_S$ by sampling rollouts from the source environment. 
Using $\mathcal{D}_S$ and the target demonstrations $\mathcal{D}_T$, 
we then train a DSB model to align the source and target dynamics, 
and concurrently fit a transition-aware reward model.

During online policy learning, 
at each online interaction step in the source environment, 
we sample an action $a_t$ from the current policy $\pi$. Executing $a_t$ yields the next state $s_{t+1}$ under the source dynamics. We then apply the learned DSB model to translate the transition into a target-style transition $\tilde{s}_{t+1} \sim \text{DSB}_{S \rightarrow T}(s_t, a_t, s_{t+1})$. Given this translated next state, we use the learned reward model to compute a modulated reward $\tilde{r}_t = R(s_t, \tilde{s}_{t+1})$. The transition $(s_t, a_t, \tilde{r}_t, \tilde{s}_{t+1})$ is added to a replay buffer $\mathcal{B}$.
We use an off-policy RL algorithm, such as Soft Actor-Critic (SAC)~\citep{haarnoja2018sac2,haarnoja2018sac1}, 
to optimize the policy using the transitions stored 
in $\mathcal{B}$. 
This enables learning a policy that maximizes target-style rewards 
while interacting exclusively with the source environment.

To further improve sample efficiency and leverage the expert demonstrations from the target domain, we initialize the policy using imitation learning, for example via Behavior Cloning (BC)~\citep{torabi2018behavioral}. The imitation loss is defined as:
\begin{equation}
\mathcal{L}_\text{IL}=\mathbb{E}_{(s,a)\sim\mathcal{D}_T}[-\log\pi(a|s)].
\end{equation}
We then incorporate the imitation loss as a regularization term during online RL optimization. 
The final policy objective becomes:
\begin{equation}
\mathcal{L}_\text{total}=\mathcal{L}_\text{RL}+\alpha\cdot\mathcal{L}_\text{IL},
\end{equation}
where $\mathcal{L}_\text{RL}$ denotes the standard RL loss,
and $\alpha$ is a balancing coefficient between exploration and imitation. 

Below we provide a theoretical analysis that shows 
the learned target-oriented policy can closely approximate the optimal policy 
under mild assumptions. 
\begin{theorem}[Policy Value Bound under DSB Translation]
Assume the reward is bounded by $R_{\max}$, and the discount factor satisfies $\gamma \in [0,1)$. 
Let $\pi$ be the policy learned using BDGxRL with 
DSB-based dynamics translation and reward modulation, 
and let $\pi^\star$ denote the optimal policy in the target environment. 
Then, when the number of IMF iterations 
for DSB training
is sufficiently large, the value difference between $\pi$ and $\pi^\star$ in the target MDP $\mathcal{M}_T$ is bounded as:
{\small
\begin{equation}
\Delta V := \left| V^{\pi^\star}_{\mathcal{M}_T}\! -\! V^{\pi}_{\mathcal{M}_\text{DSB}} \right|
\le \frac{\sqrt{2} R_{\max}\gamma}{(1 - \gamma)^2} \cdot \sqrt{\epsilon_m}
+ \frac{2 \sqrt{2} R_{\max}}{(1 - \gamma)^2} \cdot \sqrt{\epsilon_\pi},
\end{equation}}
where $\epsilon_m = \mathcal{O}\left( \frac{1}{N_S} + \frac{1}{N_T} \right)$
denotes the dynamics approximation error induced by DSB training
with $N_S$ = $|\mathcal{D}_S|$ and $N_T$ = $|\mathcal{D}_T|$, 
and $\epsilon_\pi$ captures the policy approximation error 
between the learned target-oriented policy and 
the optimal target-oriented policy. 
\end{theorem}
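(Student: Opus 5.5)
The plan is to insert the intermediate quantity $V^{\pi^\star}_{\mathcal{M}_\text{DSB}}$, the value of the optimal target policy in the surrogate MDP $\mathcal{M}_\text{DSB}$. Here $\mathcal{M}_\text{DSB}$ is the MDP whose transition kernel is $\tilde{\mathcal{T}}(\cdot\mid s,a)$, the law of $\tilde{s}_{t+1}\sim\mathrm{DSB}_{S\to T}(s,a,s_{t+1})$ with $s_{t+1}\sim\mathcal{T}_S(\cdot\mid s,a)$, and whose reward is the modulated reward. The triangle inequality gives
\begin{equation*}
\Delta V \le \left|V^{\pi^\star}_{\mathcal{M}_T}-V^{\pi^\star}_{\mathcal{M}_\text{DSB}}\right| + \left|V^{\pi^\star}_{\mathcal{M}_\text{DSB}}-V^{\pi}_{\mathcal{M}_\text{DSB}}\right|.
\end{equation*}
The first term is a model-mismatch term for a fixed policy and will produce the $\sqrt{\epsilon_m}$ contribution. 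The second is a same-model policy-mismatch term and will produce the $\sqrt{\epsilon_\pi}$ contribution. Both will be controlled by a simulation-lemma argument combined with Pinsker's inequality, which accounts for the factors $\sqrt{2}$ and $(1-\gamma)^{-2}$ in the statement.

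For the first term, I will use the standard simulation lemma. Write $V^{\pi^\star}_{\mathcal{M}_T}-V^{\pi^\star}_{\mathcal{M}_\text{DSB}}$ as a telescoping sum over time steps. At each step the one-step discrepancy is $\gamma\,\mathbb{E}_{s,a}\big[(\mathcal{T}_T-\tilde{\mathcal{T}})V\big]$, and it is bounded by $\gamma\,\|V\|_\infty\cdot 2\,\mathrm{TV}(\mathcal{T}_T(\cdot|s,a),\tilde{\mathcal{T}}(\cdot|s,a))$. Using $\|V\|_\infty\le R_{\max}/(1-\gamma)$ and summing the geometric series gives a bound of $\frac{\gamma R_{\max}}{(1-\gamma)^2}\cdot 2\,\max_{s,a}\mathrm{TV}$. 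Pinsker's inequality then converts this to $\frac{\sqrt{2}\gamma R_{\max}}{(1-\gamma)^2}\sqrt{\mathrm{KL}}$. I will define $\epsilon_m$ as the (uniform or on-policy) KL between the target dynamics and the DSB-induced dynamics. To justify the assumption that reward modulation introduces no extra term, I will note that the target reward is realized through $R(s_t,s_{t+1})$ evaluated on target-style next states, so the only discrepancy is carried by the kernel.

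For the rate $\epsilon_m=\mathcal{O}(1/N_S+1/N_T)$, I will appeal to the IMF convergence result stated in the Preliminaries: $\mathrm{KL}(\mathbb{P}^n\|\mathbb{P}^\star)\to 0$, and by the data-processing inequality this transfers to the terminal-time conditional marginals. Hence, with sufficiently many iterations, the remaining error is the discrepancy between the empirical marginals $\hat\Pi_0,\hat\Pi_1$ built from $\mathcal{D}_S,\mathcal{D}_T$ and the true marginals. I will treat this as a statistical estimation error of order $1/N_S+1/N_T$, either as an assumed parametric rate for the regression objectives in \eqref{eq:forward_opt} and \eqref{eq:backward_opt} or as a stability bound for the Schrödinger bridge with respect to its marginals. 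This is the main obstacle. The SB solution's stability in KL with respect to perturbations of the marginals is not proved in the paper, and parametric rates in KL do not come for free. I expect to state it as a mild assumption: a bounded-complexity drift class, with KL controlled by the drift regression error via Girsanov.

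For the second term, both values are evaluated in the same MDP $\mathcal{M}_\text{DSB}$, so I will use the performance-difference/simulation bound for policies. The gap is at most $\frac{2R_{\max}}{(1-\gamma)^2}\max_s \mathrm{TV}(\pi^\star(\cdot|s),\pi(\cdot|s))$, with the factor $2$ coming from the reward plus the bootstrapped value. Applying Pinsker with $\epsilon_\pi:=\max_s\mathrm{KL}(\pi^\star\|\pi)$ gives $\frac{2\sqrt{2}R_{\max}}{(1-\gamma)^2}\sqrt{\epsilon_\pi}$, where I keep the looser constant $2\cdot\sqrt{2}$ to match the statement. Adding the two bounds yields the claim.
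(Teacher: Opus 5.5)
\textbf{Verdict: the route matches the paper's, but two steps fail as you wrote them.} Your decomposition through $V^{\pi^\star}_{\mathcal{M}_\text{DSB}}$ and your two Pinsker-based bounds follow the paper's route. The paper imports both inequalities from Xu et al.~(2020) rather than deriving them. It also splits $\epsilon_m$ into a score-estimation term of order $1/N_S+1/N_T$ and an IMF term that vanishes with iterations.

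\textbf{First gap: the reward justification contradicts the factor $\gamma$ in the first constant.} Suppose both MDPs pay $R(s,s')$, with $s'$ drawn from their own kernel. Then $Q(s,a)=\int\mathcal{T}(\mathrm{d}s'\mid s,a)\,[R(s,s')+\gamma V(s')]$. The one-step discrepancy is therefore $\mathbb{E}_{s,a}[(\mathcal{T}_T-\tilde{\mathcal{T}})(R(s,\cdot)+\gamma V)]$, not $\gamma\,\mathbb{E}_{s,a}[(\mathcal{T}_T-\tilde{\mathcal{T}})V]$. Since $\|R(s,\cdot)+\gamma V\|_\infty\le R_{\max}/(1-\gamma)$, your argument yields only $\frac{\sqrt{2}R_{\max}}{(1-\gamma)^2}\sqrt{\epsilon_m}$. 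The claimed constant cannot hold in that model: at $\gamma=0$ it vanishes, yet $V=\mathbb{E}\,R(s_0,s_1)$ still depends on the kernel. A span bound with rewards in $[0,R_{\max}]$ recovers the stated constant only for $\gamma\ge 1/2$. The $\gamma$ needs what the cited lemma silently assumes and the paper never discusses: both models share a reward $r(s,a)$ that does not depend on the sampled next state, so the kernel gap enters only through the continuation value. State that as an explicit assumption instead of the $R(s,s')$ argument.

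\textbf{Second gap: IMF convergence does not leave only the empirical-marginal error.} Data processing plus the chain rule show that the learned translation kernel $\mathbf{p}_0\mapsto\mathbf{p}_1$ approaches the bridge kernel $\mathbb{P}^\star_{1\mid 0}$. However, that kernel is an entropic optimal-transport coupling between joint laws of the concatenated vectors $[s;a;s']$, and it moves the $(s,a)$ block too. Under the terminal marginal $\Pi_1$, the translated next state is distributed as $\mathcal{T}_T(\cdot\mid\tilde s,\tilde a)$ at the translated pair. It is not distributed as $\mathcal{T}_T(\cdot\mid s,a)$ at the untranslated pair to which $\mathcal{M}_\text{DSB}$ attaches it.

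So even with exact marginals and infinitely many iterations, a bias remains. Let $\tilde{\mathcal{T}}^\star$ be the kernel induced by the exact bridge; then $\mathbb{E}\,\mathrm{KL}\big(\mathcal{T}_T(\cdot\mid s,a)\,\|\,\tilde{\mathcal{T}}^\star(\cdot\mid s,a)\big)$ does not decay in $N_S,N_T$. Your Girsanov assumption controls only the learned process against the bridge, not the bridge against $\mathcal{T}_T$. Also, the expectation you obtain is over the source data's $(s,a)$ law, not the occupancy measure the simulation lemma needs.

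Getting $\epsilon_m=\mathcal{O}(1/N_S+1/N_T)$ therefore requires a structural assumption. For example, you could assume the bridge keeps $(s,a)$ fixed and transports $\mathcal{T}_S(\cdot\mid s,a)$ to $\mathcal{T}_T(\cdot\mid s,a)$ conditionally. The paper does not derive this either; it simply postulates $\mathrm{KL}(\mathcal{T}^n\|\mathcal{T}_T)\to 0$. You should make it an explicit hypothesis.
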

This theorem demonstrates the theoretical soundness of the proposed approach 
in bridging the dynamics mismatch and enabling reliable policy learning in the target domain.
\section{Experiments}
\begin{table*}[t]
\centering
\footnotesize
\setlength{\tabcolsep}{3pt}
\resizebox{\textwidth}{!}{
\begin{tabular}{lllcccccccc}
\toprule
\textbf{Task} & \textbf{Expert} & \textbf{Domain Gap} & \textbf{D4RL} & \textbf{xTED (IL)} & \textbf{xTED (CD)} & \textbf{DARA} & \textbf{GAIL} & \textbf{DARC} & \textbf{DARAIL} & \textbf{BDGxRL (Ours)} \\
\midrule
\multirow{9}{*}{HalfCheetah} 
& \multirow{3}{*}{Med} 
& Gravity     & $39.5\pm2.4$ & $40.6\pm2.0$ & $41.1\pm2.0$ & $40.6\pm1.9$ & $39.0\pm1.8$ & $40.1\pm1.6$ & $42.6\pm2.4$ & $\mathbf{46.3\pm2.7}$ \\
& & Friction    & $43.1\pm1.5$ & $43.2\pm3.0$ & $44.2\pm1.8$ & $42.3\pm1.7$ & $40.2\pm2.0$ & $41.7\pm1.9$ & $43.9\pm2.3$ & $\mathbf{47.8\pm2.9}$ \\
& & Thigh Size  & $39.5\pm2.4$ & $40.7\pm2.4$ & $41.5\pm1.6$ & $40.4\pm2.1$ & $39.5\pm2.2$ & $40.9\pm1.7$ & $42.7\pm2.1$ & $\mathbf{46.5\pm2.5}$ \\
\cmidrule{2-11}
& \multirow{3}{*}{Med-R} 
& Gravity     & $26.2\pm3.5$ & $30.7\pm1.9$ & $30.7\pm1.9$ & $27.2\pm3.2$ & $28.2\pm1.7$ & $30.5\pm2.2$ & $33.6\pm2.3$ & $\mathbf{37.8\pm2.8}$ \\
& & Friction    & $26.2\pm3.5$ & $31.8\pm3.1$ & $32.4\pm2.9$ & $23.2\pm4.7$ & $30.0\pm1.9$ & $31.3\pm2.5$ & $35.0\pm2.4$ & $\mathbf{39.5\pm2.6}$ \\
& & Thigh Size  & $26.2\pm3.5$ & $30.3\pm1.9$ & $32.9\pm2.7$ & $29.1\pm1.4$ & $28.9\pm2.1$ & $30.7\pm2.4$ & $34.4\pm2.1$ & $\mathbf{38.6\pm2.7}$ \\
\cmidrule{2-11}
& \multirow{3}{*}{Med-E} 
& Gravity     & $40.1\pm2.9$ & $45.3\pm3.8$ & $45.3\pm3.8$ & $39.7\pm3.8$ & $40.7\pm1.5$ & $42.4\pm8.9$ & $47.7\pm6.5$ & $\mathbf{53.2\pm3.0}$ \\
& & Friction    & $40.1\pm2.9$ & $45.0\pm3.0$ & $45.0\pm3.0$ & $41.4\pm3.5$ & $41.5\pm1.7$ & $44.1\pm5.3$ & $48.6\pm4.1$ & $\mathbf{55.1\pm3.5}$ \\
& & Thigh Size  & $40.1\pm2.9$ & $43.0\pm3.1$ & $45.1\pm3.3$ & $37.9\pm4.1$ & $46.8\pm2.3$ & $48.2\pm2.6$ & $51.0\pm2.9$ & $\mathbf{55.6\pm2.5}$ \\
\midrule
\multirow{9}{*}{Walker2d} 
& \multirow{3}{*}{Med} 
& Gravity     & $45.3\pm15.9$ & $58.2\pm11.7$ & $59.0\pm10.5$ & $53.2\pm11.1$ & $55.3\pm2.1$ & $56.6\pm2.6$ & $60.4\pm3.0$ & $\mathbf{63.9\pm3.2}$ \\
& & Friction    & $45.3\pm15.9$ & $54.3\pm10.6$ & $54.3\pm10.6$ & $51.1\pm11.4$ & $52.2\pm2.3$ & $54.5\pm2.4$ & $58.0\pm3.1$ & $\mathbf{61.6\pm3.4}$ \\
& & Thigh Size  & $45.3\pm15.9$ & $58.7\pm9.5$ & $58.3\pm9.8$ & $49.1\pm10.6$ & $53.1\pm2.5$ & $55.2\pm2.7$ & $59.2\pm2.8$ & $\mathbf{62.8\pm3.0}$ \\
\cmidrule{2-11}
& \multirow{3}{*}{Med-R} 
& Gravity     & $26.6\pm5.9$ & $31.8\pm4.7$ & $33.5\pm3.3$ & $13.2\pm5.7$ & $30.3\pm2.0$ & $32.8\pm2.1$ & $36.2\pm2.5$ & $\mathbf{39.8\pm3.1}$ \\
& & Friction    & $26.6\pm5.9$ & $30.3\pm2.4$ & $30.3\pm2.4$ & $28.3\pm9.0$ & $28.7\pm2.2$ & $31.5\pm2.0$ & $34.8\pm2.7$ & $\mathbf{38.1\pm3.2}$ \\
& & Thigh Size  & $26.6\pm5.9$ & $29.4\pm3.3$ & $29.4\pm3.3$ & $27.0\pm6.6$ & $28.0\pm2.3$ & $30.4\pm2.1$ & $33.9\pm2.6$ & $\mathbf{37.2\pm2.9}$ \\
\cmidrule{2-11}
& \multirow{3}{*}{Med-E} 
& Gravity     & $71.0\pm21.0$ & $82.9\pm11.1$ & $83.2\pm11.6$ & $73.4\pm14.3$ & $76.6\pm2.9$ & $79.9\pm16.7$ & $83.6\pm11.5$ & $\mathbf{87.0\pm4.9}$ \\
& & Friction    & $71.0\pm21.0$ & $74.0\pm24.5$ & $74.0\pm24.5$ & $78.8\pm20.2$ & $77.2\pm3.3$ & $80.1\pm17.9$ & $85.4\pm9.8$ & $\mathbf{89.5\pm5.0}$ \\
& & Thigh Size  & $71.0\pm21.0$ & $81.0\pm11.1$ & $81.0\pm11.1$ & $78.4\pm14.3$ & $75.3\pm2.5$ & $78.7\pm2.8$ & $83.0\pm3.2$ & $\mathbf{86.3\pm3.4}$ \\
\bottomrule
\end{tabular}
}
\caption{Experimental results across three domain gaps (gravity, friction, thigh size) and three demonstration levels (Medium, Medium-Replay, Med-Expert) on two MuJoCo environments, averaged over five runs.}
\label{tab:main_results}
\end{table*}
\subsection{Experimental Settings}
\paragraph{Experimental Environments} 
We evaluate our method in MuJoCo~\citep{todorov2012mujoco} physics-based robotic simulation environments, with offline demonstration datasets provided by the D4RL benchmark~\cite{fu2020d4rl}. The source domain is derived by modifying the physics parameters of the original MuJoCo environments, introducing three types of domain gaps: (1) 2× gravity, (2) 0.25×/0.5× friction, and (3) 2× thigh size of the robots. These modifications alter the transition dynamics between the source and target environments while leaving the state and action spaces unchanged. 
The experiments involve two tasks (HalfCheetah and Walker2d), three levels of target demonstration expertise (Medium, Medium-Replay, Medium-Expert), and the three types of domain gaps mentioned above. 
\paragraph{Comparison Methods}
We compare our method against several state-of-the-art cross-domain RL baselines, including xTED~\citep{niu2024xted} (combined with IQL~\citep{kostrikov2021offline} or DARA~\citep{liu2022dara}), DARA~\citep{liu2022dara}, DARC~\citep{eysenbach2020off}, DARAIL~\citep{guo2024off}, as well as the source behavior policy from the expert data in D4RL~\citep{fu2020d4rl}, and an imitation learning policy (GAIL)~\citep{ho2016generative} trained on the target domain.
\subsection{Experimental Results}
We report the experimental results using D4RL normalized scores~\citep{fu2020d4rl}, where episodic returns are scaled such that a random policy scores 0 and an expert policy scores 100. This score is a widely adopted metric for comparing performance across different tasks and methods. All results are summarized in Table~\ref{tab:main_results}.

Overall, our proposed BDGxRL consistently achieves the highest performance across all tasks and settings, demonstrating strong generalizability and adaptability under various dynamics gaps. In the HalfCheetah environment, BDGxRL surpasses all baselines across all domain gaps and expert levels. For example, under the Medium-Expert setting with the Gravity gap, BDGxRL reaches 53.2, significantly outperforming DARC (47.7) and DARAIL (51.0). Even under the challenging Medium-Replay setting, BDGxRL shows robust performance, achieving up to 39.5 (Friction) compared to 35.0 by DARAIL and 32.4 by xTED (CD).

A similar trend is observed in the Walker2d environment. BDGxRL outperforms prior methods in all domain gap settings. Under the Medium-Expert setting with Friction, it achieves 89.5, outperforming DARAIL (85.4) and GAIL (77.2). Notably, even when the baseline methods suffer from high variance or degraded performance in low-quality demonstrations (e.g., DARA in Med-R setting), BDGxRL maintains stable and superior results across all cases.

We also visualize the average scores of each method across all tasks in Figure~\ref{fig:overall_performance}, which summarizes the overall performance across diverse tasks, domain gaps, and demonstration levels. 
BDGxRL consistently outperforms all baselines, achieving the highest mean score among all methods. The noticeable gap between BDGxRL and the second-best method (DARAIL) highlights its strong generalization capability under dynamics mismatch. In contrast, baseline methods lacking explicit dynamics alignment, such as D4RL and GAIL, consistently underperform.

These results demonstrate the effectiveness of our proposed dynamics alignment and reward modulation strategies. By jointly leveraging online source rollouts and offline target demonstrations, BDGxRL bridges the dynamics gap while preserving policy optimality, establishing a new state-of-the-art in cross-domain RL.
\begin{figure}[t]
\centering
\includegraphics[width=.85\linewidth]{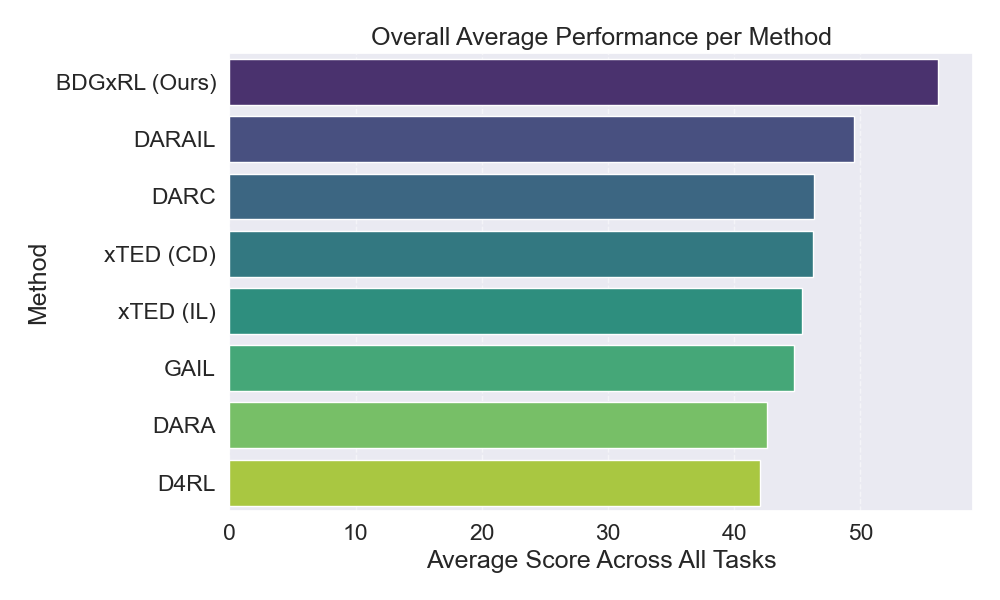}
\caption{Overall average performance of each method across all tasks, domain gaps, and demonstration levels.}
\label{fig:overall_performance}
\vskip -.15in
\end{figure}
\subsection{Ablation Study}
\begin{table}[t]
\centering
\footnotesize
\setlength{\tabcolsep}{2pt}
\resizebox{\columnwidth}{!}{
\begin{tabular}{llcccc}
\toprule
	\textbf{Task} & \textbf{\makecell{Domain\\ Gap}} & \textbf{BDGxRL} & \textbf{w/o IL} & \textbf{w/o RM} & \textbf{\makecell{w/o\\ Alignment}} \\
\midrule
\multirow{3}{*}{HalfCheetah} 
	& Gravity     & $46.3_{ \pm 2.7}$ & $39.5_{ \pm 3.5}$ & $42.1_{ \pm 3.1}$ & $33.8_{\pm 5.2}$ \\
	& Friction    & $47.8_{ \pm 2.9}$ & $40.3_{ \pm 3.3}$ & $43.2_{\pm 2.9}$ & $35.4_{ \pm 4.8}$ \\
	& Thigh Size  & $46.5_{ \pm 2.5}$ & $40.1_{ \pm 3.0}$ & $42.8_{ \pm 2.8}$ & $34.9_{ \pm 5.1}$ \\
\midrule
\multirow{3}{*}{Walker2d} 
	& Gravity     & $63.9_{ \pm 3.2}$ & $55.5_{\pm 4.1}$ & $58.2_{ \pm 3.3}$ & $44.7_{\pm 6.7}$ \\
	& Friction    & $61.6_{\pm 3.4}$ & $53.3_{\pm 3.9}$ & $56.1_{\pm 3.1}$ & $43.2_{\pm 5.8}$ \\
	& Thigh Size  & $62.8_{\pm 3.0}$ & $54.1_{\pm 4.0}$ & $56.8_{\pm 3.2}$ & $43.9_{\pm 6.0}$ \\
\bottomrule
\end{tabular}
	}
\caption{Ablation results under the Medium expert level across three domain gaps on two MuJoCo environments.} 
\label{tab:ablation_selected}
\vskip -.1in
\end{table}
To further evaluate the contribution of each component in our method, we conduct an ablation study by comparing the full BDGxRL framework with three ablated variants: (1) w/o IL, which removes imitation learning for policy initialization;
(2) w/o RM, which disables reward modulation and directly uses the source domain reward model; 
and (3) w/o Alignment, which removes the transition modulation step, relying solely on source domain dynamics without applying diffusion-based adaptation. The results are reported in Table~\ref{tab:ablation_selected}, focusing on the Medium expert level.

The results show that removing transition alignment
leads to the most significant performance drop across both tasks and all domain gaps, highlighting its critical role in addressing dynamics mismatch. Excluding imitation learning also results in substantial performance degradation, particularly on Walker2d, emphasizing the importance of leveraging offline target demonstrations. Reward modulation has a comparatively smaller but consistent impact on performance. Overall, these findings confirm that all three components: transition alignment, imitation learning, and reward modulation, play essential and complementary roles in achieving robust cross-domain policy learning.

%
\section{Conclusion}
In this work, 
we proposed BDGxRL, a novel framework that performs transition dynamics alignment via DSB
and reward modulation for cross-domain RL. 
It enables target-oriented online policy learning
in a source environment using only offline expert demonstrations without reward signals from the target domain. 
We also provided a theoretical 
bound on the learned policy value gap. 
Experimental results on MuJoCo benchmarks across various domain gaps and demonstration levels 
show that BDGxRL consistently outperforms prior state-of-the-art approaches, demonstrating its robustness and 
effectiveness in bridging dynamics discrepancies for cross-domain RL.

\bibliography{aaai26}
\clearpage
%
\appendix
\section{Training Algorithm of BDGxRL}

The full training procedure of the proposed BDGxRL is outlined in Algorithm~\ref{alg:target-policy}.

\begin{algorithm}[t]
\caption{Target-Oriented Policy Learning of BDGxRL}
\label{alg:target-policy}
\begin{algorithmic}[1]
\STATE \textbf{Input:} Source environment $E_S$, offline target dataset $\mathcal{D}_T$, initial policy $\pi(a_t|s_t)$, replay buffer $\mathcal{B}$
\vspace{3pt}
\STATE \% \textbf{Offline Pretraining Phase:}
\STATE Collect source dataset $\mathcal{D}_S$ by interacting with $E_S$.
\STATE Train the transition-aware reward model $R(s_t, s_{t+1})$ on $\mathcal{D}_S$ using Eq. (10).
\STATE 
Train the DSB model on $\mathcal{D}_S$ and $\mathcal{D}_T$ using Eq. (4)(5) and Eq.(6)(7).
\STATE Pretrain policy $\pi(a_t|s_t)$ via behavior cloning on $\mathcal{D}_T$.
\vspace{3pt}
\STATE \% \textbf{Online Policy Learning Phase:}
\WHILE{not converged}
\STATE Reset environment $E_S$ and observe initial state $s_0$.
\FOR{each timestep $t$}
\STATE Sample action $a_t \sim \pi(\cdot \mid s_t)$.
\STATE Execute $a_t$ to obtain next state $s_{t+1}$ and reward $r_t$.
\STATE Translate state: $\tilde{s}_{t+1} = \mathrm{DSB}_{S \to T}(s_t, a_t, s_{t+1})$.
\STATE Compute modulated reward: $\tilde{r}_t = R(s_t, \tilde{s}_{t+1})$.
\STATE Append $(s_t, a_t, s_{t+1}, r_t)$ to $\mathcal{D}_S$ for model update.
\STATE Store transition $(s_t, a_t, \tilde{r}_t, \tilde{s}_{t+1})$ in replay buffer $\mathcal{B}$.
\STATE Set $s_{t+1}=\tilde{s}_{t+1}$.
\ENDFOR
\STATE Update policy $\pi(a_t|s_t)$ using SAC with buffer $\mathcal{B}$.
\STATE Periodically update DSB and reward models using the augmented $\mathcal{D}_S$.
\ENDWHILE
\end{algorithmic}
\end{algorithm}
%

\section{Full Ablation Results}
\begin{table*}[t]
\centering
\footnotesize
\setlength{\tabcolsep}{4pt}
{
\begin{tabular}{lllcccc}
\toprule
\textbf{Task} & \textbf{Expert} & \textbf{Domain Gap} & \textbf{BDGxRL (Ours)} & \textbf{w/o IL} & \textbf{w/o RM} & \textbf{w/o Alignment} \\
\midrule
\multirow{9}{*}{HalfCheetah}
& \multirow{3}{*}{Med}
& Gravity     & $46.3 \pm 2.7$ & $39.5 \pm 3.5$ & $42.1 \pm 3.1$ & $33.8 \pm 5.2$ \\
& & Friction    & $47.8 \pm 2.9$ & $40.3 \pm 3.3$ & $43.2 \pm 2.9$ & $35.4 \pm 4.8$ \\
& & Thigh Size  & $46.5 \pm 2.5$ & $40.1 \pm 3.0$ & $42.8 \pm 2.8$ & $34.9 \pm 5.1$ \\
\cmidrule{2-7}
& \multirow{3}{*}{Med-R}
& Gravity     & $37.8 \pm 2.8$ & $31.0 \pm 4.0$ & $33.2 \pm 3.7$ & $25.6 \pm 6.1$ \\
& & Friction    & $39.5 \pm 2.6$ & $32.5 \pm 3.2$ & $34.4 \pm 3.4$ & $27.2 \pm 5.3$ \\
& & Thigh Size  & $38.6 \pm 2.7$ & $31.8 \pm 3.0$ & $33.9 \pm 3.2$ & $26.4 \pm 5.0$ \\
\cmidrule{2-7}
& \multirow{3}{*}{Med-E}
& Gravity     & $53.2 \pm 3.0$ & $45.2 \pm 4.2$ & $48.8 \pm 3.6$ & $37.5 \pm 6.4$ \\
& & Friction    & $55.1 \pm 3.5$ & $46.3 \pm 4.0$ & $50.1 \pm 3.7$ & $39.3 \pm 6.0$ \\
& & Thigh Size  & $55.6 \pm 2.5$ & $46.9 \pm 3.8$ & $50.4 \pm 3.6$ & $40.7 \pm 5.8$ \\
\midrule
\multirow{9}{*}{Walker2d}
& \multirow{3}{*}{Med}
& Gravity     & $63.9 \pm 3.2$ & $55.5 \pm 4.1$ & $58.2 \pm 3.3$ & $44.7 \pm 6.7$ \\
& & Friction    & $61.6 \pm 3.4$ & $53.3 \pm 3.9$ & $56.1 \pm 3.1$ & $43.2 \pm 5.8$ \\
& & Thigh Size  & $62.8 \pm 3.0$ & $54.1 \pm 4.0$ & $56.8 \pm 3.2$ & $43.9 \pm 6.2$ \\
\cmidrule{2-7}
& \multirow{3}{*}{Med-R}
& Gravity     & $39.8 \pm 3.1$ & $33.3 \pm 3.8$ & $35.5 \pm 3.6$ & $28.2 \pm 5.4$ \\
& & Friction    & $38.1 \pm 3.2$ & $32.1 \pm 3.7$ & $34.0 \pm 3.3$ & $27.0 \pm 4.9$ \\
& & Thigh Size  & $37.2 \pm 2.9$ & $31.3 \pm 3.5$ & $33.2 \pm 3.2$ & $26.2 \pm 4.7$ \\
\cmidrule{2-7}
& \multirow{3}{*}{Med-E}
& Gravity     & $87.0 \pm 4.9$ & $75.1 \pm 5.6$ & $80.3 \pm 4.2$ & $62.4 \pm 7.5$ \\
& & Friction    & $89.5 \pm 5.0$ & $77.3 \pm 5.4$ & $82.0 \pm 4.3$ & $64.2 \pm 7.2$ \\
& & Thigh Size  & $86.3 \pm 3.4$ & $74.8 \pm 5.3$ & $79.4 \pm 4.1$ & $61.8 \pm 6.8$ \\
\bottomrule
\end{tabular}
}
\caption{Full ablation results across three domain gaps (gravity, friction, thigh size) and three demonstration levels (Medium, Medium-Replay, Med-Expert) on two MuJoCo environments, averaged over three runs.}
\label{tab:ablation_full}
\end{table*}
Table~\ref{tab:ablation_full} reports the full ablation results across all expert levels (Medium, Medium-Replay, and Medium-Expert) and domain gaps (gravity, friction, and thigh size) on two MuJoCo tasks. The results consistently highlight the importance of each component within BDGxRL.

Across all settings, removing transition alignment (w/o Alignment) leads to the most pronounced performance degradation, particularly in high-performing regimes such as Walker2d (Med-E), where the score drops from 87.0 to 62.4 (gravity) and from 89.5 to 64.2 (friction). This underscores the essential role of DSB-based transition alignment in handling dynamics mismatch.
Excluding imitation learning (w/o IL) also causes significant drops in performance, especially in lower-quality data regimes such as HalfCheetah (Med-R), where training from scratch limits policy effectiveness. This demonstrates the value of leveraging offline target demonstrations for policy initialization under domain shifts.
Disabling reward modulation (w/o RM) results in moderate yet consistent performance declines. For example, in Walker2d (Med-R), performance on the friction setting drops from 38.1 to 32.1. Although its effect is less drastic than the other components, reward modulation still contributes to improved alignment of cross-domain rewards.

In summary, the full ablation results confirm the complementary nature of BDGxRL’s three key components. Each plays a distinct role in enabling robust cross-domain policy learning across a wide range of domain gaps and data conditions.
\section*{Theoretical Analysis of BDGxRL}
In this section, we provide a theoretical analysis of BDGxRL, based on policy evaluation error bounds, to establish a theoretical guarantee for the effectiveness of the policy learned via BDGxRL. Specifically, we prove Theorem 1, which shows that the performance of the learned BDGxRL policy is bounded within a specific error margin under mild assumptions.

To prove Theorem 1, we begin by analyzing the total sources of error in policy learning. These errors primarily arise from two factors:
\begin{itemize}
\item Transition model error: due to the inaccuracy of the learned transition model via DSB.
\item Policy optimization error: resulting from the suboptimality of policy learning using deep neural networks.
\end{itemize}
Formally, let $\mathcal{M}_T$ denote the true target MDP, $\mathcal{M}_\text{DSB}$ the learned transition model obtained via DSB, $\pi^\star$ the optimal policy in the true MDP, and $\pi$ the target-oriented policy learned by BDGxRL. Then the total policy evaluation error can be decomposed as:
\begin{equation}
\left| V^{\pi^\star}_{\mathcal{M}_T} - V^{\pi}_{\mathcal{M}_\text{DSB}} \right|
\leq 
\underbrace{\left| V^{\pi^\star}_{\mathcal{M}_T} - V^{\pi^\star}_{\mathcal{M}_\text{DSB}} \right|}_{\text{transition model error}} +
\underbrace{\left| V^{\pi^\star}_{\mathcal{M}_\text{DSB}} - V^{\pi}_{\mathcal{M}_\text{DSB}} \right|}_{\text{policy optimization error}}.
\end{equation}
We now show that, under mild and reasonable assumptions, both error terms can be explicitly bounded.
\begin{assumption}
\label{asm:assumption1}
Let $N_\text{DSB}$ denote the number of IMF iterations used in solving the DSB, $N_S$ the size of the source domain dataset, and $N_T$ the size of the target domain dataset. The reward function is bounded by $R_{\max}$, and the discount factor satisfies $\gamma\in[0, 1)$. We assume that the DSB is correctly implemented with a score-based model and trained with a sufficiently large number of IMF iterations. The target dataset size $N_T = |\mathcal{D}_T|$ is fixed by offline demonstrations, while the source dataset size $N_S = |\mathcal{D}_S|$ grows during training via data collection in the source domain.
\end{assumption}
Under Assumption~\ref{asm:assumption1}, the DSB-based dynamics alignment is assumed to produce accurate transitions when the number of IMF iterations $N_\text{DSB}$ is large enough, and the score-based model is trained using both source and target data. We now present a bound on the transition model error.
\begin{lemma}[Transition Model Error Bound]
\label{lem:trans}
Under Assumption~\ref{asm:assumption1}, the transition model error is bounded as:
\begin{equation}
\label{eqa:lemma1}
\left| V^{\pi^\star}_{\mathcal{M}_T} - V^{\pi^\star}_{\mathcal{M}_\text{DSB}} \right|
\leq 
\frac{\sqrt{2} R_{\max} \gamma}{(1 - \gamma)^2} \cdot \sqrt{\epsilon_m},
\end{equation}
where the transition model error $\epsilon_m$ decomposes as:
\begin{equation}
\epsilon_m \approx \mathcal{O} \left( \frac{1}{N_S} + \frac{1}{N_T} \right).
\end{equation}
The transition model error $\epsilon_m$ decreases as the source data size increases during policy training, and is lower bounded by the target domain demonstration size as $\mathcal{O}(\frac{1}{N_T})$.
\end{lemma}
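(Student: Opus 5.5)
The plan is a simulation-lemma argument for the fixed policy $\pi^\star$, followed by Pinsker's inequality to convert a KL-type error into total variation. Define $\epsilon_m$ as the worst-case (or $\pi^\star$-occupancy-averaged) KL divergence between the true target kernel and the DSB-induced kernel,
\begin{equation}
\epsilon_m := \sup_{s,a}\mathrm{KL}\bigl(\mathcal{T}_T(\cdot\mid s,a)\,\|\,\mathcal{T}_\text{DSB}(\cdot\mid s,a)\bigr),
\end{equation}
where $\mathcal{T}_\text{DSB}(\cdot\mid s,a)$ is the law of $\tilde{s}_{t+1}$ produced by $\mathrm{DSB}_{S\to T}$ applied to $s_{t+1}\sim\mathcal{T}_S(\cdot\mid s,a)$. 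Both MDPs share the modulated reward, which is bounded by $R_{\max}$. Hence $V^{\pi^\star}_{\mathcal{M}_T}$ and $V^{\pi^\star}_{\mathcal{M}_\text{DSB}}$ differ only through their transition kernels.

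\textbf{Step 1: simulation lemma.} Write both value functions through their Bellman equations for $\pi^\star$ and subtract. This gives
\begin{equation}
V_T - V_\text{DSB} = \gamma P_T^{\pi^\star}(V_T - V_\text{DSB}) + \gamma\,(P_T^{\pi^\star} - P_\text{DSB}^{\pi^\star})V_\text{DSB}.
\end{equation}
Unroll the recursion, and bound $\|V_\text{DSB}\|_\infty\le R_{\max}/(1-\gamma)$. Use $|(P-P')V|\le \|P-P'\|_1\|V\|_\infty$ with $\|P-P'\|_1=2\,\mathrm{TV}$. Summing the geometric series contributes one further factor $1/(1-\gamma)$, which yields
\begin{equation}
|V_T - V_\text{DSB}|\le \frac{2\gamma R_{\max}}{(1-\gamma)^2}\,\sup_{s,a}\mathrm{TV}.
\end{equation}

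\textbf{Step 2: Pinsker.} Apply $\mathrm{TV}\le\sqrt{\tfrac12\mathrm{KL}}$ to get $2\sqrt{\epsilon_m/2}=\sqrt{2}\sqrt{\epsilon_m}$. This produces exactly the constant $\sqrt{2}R_{\max}\gamma/(1-\gamma)^2$ in \eqref{eqa:lemma1}.

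\textbf{Step 3: rate of $\epsilon_m$.} Split $\epsilon_m$ into three parts.
\begin{itemize}
\item The IMF convergence error $\mathrm{KL}(\mathbb{P}^n\|\mathbb{P}^\star)\to0$ from the Preliminaries. By Assumption~\ref{asm:assumption1} ($N_\text{DSB}$ large), this term is negligible.
\item The drift-regression (score-matching) error for $v_\theta$ and $v_\phi$. Girsanov's theorem bounds the KL between the learned and ideal path measures by $\frac{1}{2\sigma_0^2}\int_0^1\mathbb{E}\|v_\theta-v^\star\|^2\,\mathrm{d}k$. By data processing, this also bounds the KL of the terminal (next-state) marginals.
\item The statistical error of the empirical marginals $\hat\Pi_0$ and $\hat\Pi_1$. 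The regression objectives \eqref{eq:forward_opt} and \eqref{eq:backward_opt} are estimated from $N_S$ source and $N_T$ target samples. A standard parametric/least-squares generalization bound gives excess risk $\mathcal{O}(1/N_S+1/N_T)$, with the two endpoint marginals contributing additively.
\end{itemize}
Since $N_T$ is fixed, the $1/N_T$ term is a floor, while the $1/N_S$ term decays as $\mathcal{D}_S$ grows.

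\textbf{Main obstacle.} The hard part is Step 3. The SB couples joint vectors $[s_t;a_t;s_{t+1}]$, but the simulation lemma needs conditional kernels given $(s,a)$. I would try to pass from the joint KL to the expected conditional KL via the chain rule: $\mathrm{KL}$ of joints $\ge \mathbb{E}_{(s,a)}\mathrm{KL}$ of conditionals. Doing this requires two things:
\begin{itemize}
\item Replacing the $\sup_{s,a}$ in Step 1 by an expectation under the relevant occupancy measure, which needs a bounded concentrability ratio between that occupancy and the data distribution.
\item Assuming the translation preserves the $(s_t,a_t)$ coordinates.
\end{itemize}
The $\mathcal{O}(1/N)$ rate would then be taken as a stated consequence of realizability and a parametric function class, under which the squared-loss excess risk scales as $1/N$, not $1/\sqrt{N}$.
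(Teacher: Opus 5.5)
Your Step 1 contains a genuine error, and it sits exactly where the factor $\gamma$ in the lemma comes from. You stipulate that both MDPs carry the modulated, transition-aware reward $R(s,s')$. You then subtract the Bellman equations as if the immediate rewards cancel. For a reward that depends on the next state they do not, and $(V_T-V_\text{DSB})(s)$ picks up the extra term
\[
\mathbb{E}_{a\sim\pi^\star(\cdot\mid s)}\Bigl[\mathbb{E}_{s'\sim\mathcal{T}_T(\cdot\mid s,a)}R(s,s')-\mathbb{E}_{s'\sim\mathcal{T}_\text{DSB}(\cdot\mid s,a)}R(s,s')\Bigr].
\]
The perturbation is therefore $\mathcal{T}_T-\mathcal{T}_\text{DSB}$ applied to $R(s,\cdot)+\gamma V_\text{DSB}$, whose sup-norm is only bounded by $R_{\max}/(1-\gamma)$. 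Your argument then gives $\frac{\sqrt{2}R_{\max}}{(1-\gamma)^2}\sqrt{\epsilon_m}$, without the $\gamma$. This loss is real, not slack in your estimate. Take $\gamma=0$, two states, $\mathcal{T}_T(s_1\mid s_0)=1$, $\mathcal{T}_\text{DSB}(s_1\mid s_0)=\tfrac12$, and reward $R_{\max}$ exactly when $s'=s_1$ (else $0$). Then $\epsilon_m=\log 2$, the left-hand side of the lemma is $R_{\max}/2$, and the right-hand side is $0$. The stated constant requires the reward to be a fixed function $r(s,a)$ common to both MDPs. That is the setting of the Xu et al.\ (2020) bound the paper simply cites, and of the paper's MDP definition $\mathcal{R}:\mathcal{S}\times\mathcal{A}\to\mathbb{R}$. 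With that premise in place of ``shared modulated reward'', your Steps 1--2 are precisely the argument behind the cited bound. The paper averages the KL over an occupancy measure rather than taking a supremum; you can handle that with the occupancy form of the simulation lemma plus Jensen.

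The rest of your plan matches the paper. It splits $\epsilon_m=\epsilon_{\text{sbm}}+\epsilon_{\text{dsb}}$, takes $\epsilon_{\text{sbm}}=\mathcal{O}(1/N_S+1/N_T)$ from a score-estimation sample-complexity result, and sends $\epsilon_{\text{dsb}}\to0$ by IMF convergence; your Girsanov-plus-generalization treatment is a finer version of the first term. The paper never confronts the joint-versus-conditional issue you raise, so your Step 3 is no weaker than its proof. However, your proposed repair hides an extra assumption. If the bridge leaves the $(s_t,a_t)$ block unchanged, its terminal state--action marginal is that of $\mathcal{D}_S$. By the chain rule, the KL between $\Pi_1$ and the bridge's terminal law is then at least the KL between the state--action marginals of $\mathcal{D}_T$ and $\mathcal{D}_S$. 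That quantity is fixed and does not shrink with $N_S$, $N_T$ or the number of IMF iterations. So a bridge cannot both preserve $(s_t,a_t)$ and reach $\Pi_1$ unless those marginals coincide, or unless you reweight or build the bridge conditionally on $(s,a)$. Finally, KL has no triangle inequality, so the additive split of $\epsilon_m$ used by both you and the paper should be carried out in total variation before applying Pinsker, at the cost of a constant factor.
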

\begin{proof}
Based on the previous conclusion from~\citep{xu2020error}, when learning a policy in an imitation environment with transition model errors, and assuming that the learned transition model is close to the true transition dynamics in the sense that:
\begin{equation}
\mathbb{E}_{(s,a)\sim\rho^\pi} \left[ \mathrm{KL}\left( \mathcal{T}_T(\cdot|s,a) \parallel \mathcal{T}_\text{DSB}(\cdot|s,a) \right) \right] \leq \epsilon_m,
\end{equation}
then the transition model error is bounded as:
\begin{equation}
\left| V^{\pi^\star}_{\mathcal{M}_T} - V^{\pi^\star}_{\mathcal{M}_\text{DSB}} \right|
\leq
\frac{\sqrt{2} R_{\max} \gamma}{(1 - \gamma)^2} \cdot \sqrt{\epsilon_m}.
\end{equation}
\paragraph{Decomposition of $\epsilon_m$}
In our context, the error $\epsilon_m$ in learning the transition model arises from two sources: (1) the approximation error of the score-based model used in DSB, and (2) the optimization error in solving the DSB problem. We write:
\begin{equation}
\epsilon_m = \underbrace{\epsilon_{\text{sbm}}}_{\text{score-based model error}} + \underbrace{\epsilon_{\text{dsb}}}_{\text{DSB solver error}}.
\end{equation}
\paragraph{Score-based model error $\epsilon_{\text{sbm}}$}
According to~\citep{zhu2023sample}, the estimation error of score-based models approximately scales inversely with the amount of training data. Suppose the dataset size is $N$, then the estimation error is bounded by $\mathcal{O}(\frac{1}{N})$.
In our setting, the DSB model is trained bidirectionally, which effectively doubles the learning error. Let $N_S$ denote the number of transitions collected from the source domain and $N_T$ denote the number of offline demonstrations from the target domain. Then the total score-based model error is bounded by:
\begin{equation}
\epsilon_{\text{sbm}} = \mathcal{O}\left( \frac{1}{N_S} + \frac{1}{N_T} \right).
\end{equation}
Since transitions from the source domain are continuously collected during online training, the term $\mathcal{O}(\frac{1}{N_S})$ keeps decreasing over time. With sufficiently long training, the score-based model error $\epsilon_{\text{sbm}}$ will be dominated by the fixed-size offline demonstration data from the target domain.
\paragraph{DSB error $\epsilon_{\text{dsb}}$}
The DSB solver error $\epsilon_{\text{dsb}}$ originates from imperfect optimization of the DSB objective relative to the true target transition distribution. Let $\mathcal{T}_T$ denote the ground-truth transition distribution in the target environment, and let $\mathcal{T}^n$ denote the transition distribution induced by the DSB after $n$ iterations of IPF or IMF. Then:
\begin{equation}
\epsilon_{\text{dsb}} = \mathrm{KL}(\mathcal{T}^n \,\|\, \mathcal{T}_T).
\end{equation}
As discussed in the preliminary section, the residual error of DSB vanishes asymptotically as the number of IPF/IMF iterations increases. That is,
\begin{equation}
\lim_{n \to \infty}\epsilon_{\text{dsb}}= \lim_{n\to\infty}\mathrm{KL}(\mathcal{T}^n \,\|\, \mathcal{T}_T)= 0.
\end{equation}
Therefore, with sufficiently many iterations, the DSB optimization error $\epsilon_{\text{dsb}}$ can be made arbitrarily small.
Under Assumption~\ref{asm:assumption1}, we have shown that the transition model error term $\left| V^\pi_{\mathcal{M}_T} - V^\pi_{\mathcal{M}_\text{DSB}} \right|$ is bounded by Eq.~\eqref{eqa:lemma1}, with the overall error $\epsilon_m$ decomposed into two components, both of which diminish with sufficient data and training.
\end{proof}

\begin{theorem}[Policy Value Bound under DSB Translation]
Assume the reward is bounded by $R_{\max}$, and the discount factor satisfies $\gamma \in [0,1)$. 
Let $\pi$ be the policy learned using BDGxRL with 
DSB-based dynamics translation and reward modulation, 
and let $\pi^\star$ denote the optimal policy in the target environment. 
Then, when the number of IMF iterations 
for DSB training
is sufficiently large, the value difference between $\pi$ and $\pi^\star$ in the target MDP $\mathcal{M}_T$ is bounded as:
{\small
\begin{equation}
\Delta V := \left| V^{\pi^\star}_{\mathcal{M}_T}\! -\! V^{\pi}_{\mathcal{M}_\text{DSB}} \right|
\le \frac{\sqrt{2} R_{\max}\gamma}{(1 - \gamma)^2} \cdot \sqrt{\epsilon_m}
+ \frac{2 \sqrt{2} R_{\max}}{(1 - \gamma)^2} \cdot \sqrt{\epsilon_\pi},
\end{equation}}
where $\epsilon_m = \mathcal{O}\left( \frac{1}{N_S} + \frac{1}{N_T} \right)$
denotes the dynamics approximation error induced by DSB training
with $N_S$ = $|\mathcal{D}_S|$ and $N_T$ = $|\mathcal{D}_T|$, 
and $\epsilon_\pi$ captures the policy approximation error 
between the learned target-oriented policy and the optimal target-oriented policy.
\end{theorem}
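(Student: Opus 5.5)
The proof follows the decomposition already set up in this section. By the triangle inequality,
\[
\Delta V \le \bigl|V^{\pi^\star}_{\mathcal{M}_T}-V^{\pi^\star}_{\mathcal{M}_\text{DSB}}\bigr| + \bigl|V^{\pi^\star}_{\mathcal{M}_\text{DSB}}-V^{\pi}_{\mathcal{M}_\text{DSB}}\bigr|.
\]
The first term is the transition model error. Lemma~\ref{lem:trans} applies to it directly under Assumption~\ref{asm:assumption1}. Once the number of IMF iterations is large enough, $\epsilon_{\text{dsb}}$ is negligible and $\epsilon_m=\mathcal{O}(1/N_S+1/N_T)$. This gives exactly the first summand $\frac{\sqrt{2}R_{\max}\gamma}{(1-\gamma)^2}\sqrt{\epsilon_m}$.

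The second term compares two policies inside the same MDP $\mathcal{M}_\text{DSB}$, whose dynamics and modulated reward $\tilde r=R(s,\tilde s')$ are both fixed. I would bound it with the standard policy-discrepancy argument in the style of \citep{xu2020error}.

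First, I would define $\epsilon_\pi$ as an on-distribution policy divergence:
\[
\mathbb{E}_{s\sim d^{\pi^\star}_{\mathcal{M}_\text{DSB}}}\bigl[\mathrm{KL}(\pi^\star(\cdot|s)\,\|\,\pi(\cdot|s))\bigr]\le\epsilon_\pi .
\]
This is natural here because the BC/SAC objective controls a log-likelihood gap. Second, I would use the occupancy-measure form of the value, $V^{\pi}=\frac{1}{1-\gamma}\mathbb{E}_{(s,a)\sim\rho^\pi}[\tilde r]$. This gives $|V^{\pi^\star}-V^{\pi}|\le \frac{R_{\max}}{1-\gamma}\|\rho^{\pi^\star}-\rho^{\pi}\|_1$. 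Third, a simulation-lemma / telescoping argument bounds the occupancy gap:
\[
\|\rho^{\pi^\star}-\rho^{\pi}\|_1\le \frac{2}{1-\gamma}\,\mathbb{E}_{s\sim d^{\pi^\star}}\bigl[D_{TV}(\pi^\star(\cdot|s),\pi(\cdot|s))\bigr].
\]
Fourth, Pinsker's inequality with Jensen gives $\mathbb{E}[D_{TV}]\le\sqrt{\epsilon_\pi/2}$.

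Combining these yields $\frac{\sqrt{2}R_{\max}}{(1-\gamma)^2}\sqrt{\epsilon_\pi}$. The statement has a factor of $2\sqrt{2}$, so I would keep the looser constant. One way to get it is to count both the action-distribution gap and the induced next-state gap, or to use the cruder reward-range bound $2R_{\max}$ for $|\tilde r|$. Either choice gives $\frac{2\sqrt{2}R_{\max}}{(1-\gamma)^2}\sqrt{\epsilon_\pi}$. Adding the two pieces proves the theorem.

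The main obstacle is the bookkeeping of the reward and the state distribution in the second term. The modulated reward must be bounded by $R_{\max}$. Since $R$ is regressed onto source rewards bounded by $R_{\max}$, I would assume, or enforce by clipping, that $|R|\le R_{\max}$. The KL in $\epsilon_\pi$ must also be taken under the correct occupancy, $d^{\pi^\star}$ in $\mathcal{M}_\text{DSB}$, and not under the data distribution. If only a bound under $\mathcal{D}_T$ or $d^\pi$ is available, I would absorb the mismatch into the definition of $\epsilon_\pi$. That is consistent with the statement leaving $\epsilon_\pi$ as an abstract policy approximation error.
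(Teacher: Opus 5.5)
Your proposal is correct and follows the paper's proof: the same triangle-inequality split, Lemma~\ref{lem:trans} for the transition term, and the behavior-cloning bound of \citep{xu2020error} for the policy term. The paper simply cites that bound under a per-state condition $\mathrm{KL}(\pi(\cdot|s)\|\pi^\star(\cdot|s))\le\epsilon_\pi$, whereas you re-derive it via occupancy measures and Pinsker, and your derivation actually gives the sharper constant $\sqrt{2}$ for either KL direction (only total variation is used), which trivially implies the stated $2\sqrt{2}$, so your ad hoc explanations for the extra factor of $2$ are unnecessary.
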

\begin{proof}
Based on Theorem 1 of~\citep{xu2020error}, 
under Assumption~\ref{asm:assumption1}, for any policy $\pi$ satisfying $\mathrm{KL}(\pi(\cdot|s)\parallel\pi^\star(\cdot|s)) \leq \epsilon_\pi$, the policy optimization error is bounded as:
\begin{equation}
\left| V^{\pi^\star}_{\mathcal{M}_\text{DSB}} - V^{\pi}_{\mathcal{M}_\text{DSB}} \right|\leq
\frac{2 \sqrt{2} R_{\max}}{(1 - \gamma)^2} \cdot \sqrt{\epsilon_\pi}.
\end{equation}
Combining this with the transition model error $\left| V^\pi_{\mathcal{M}_T} - V^\pi_{\mathcal{M}_\text{DSB}} \right|$ established in Lemma~\ref{lem:trans}, the total value difference between the learned target-oriented policy $\pi$ and the optimal target policy $\pi^\star$ is bounded by:
\begin{equation}
\small
\Delta V := \left| V^{\pi^\star}_{\mathcal{M}_T} - V^{\pi}_{\mathcal{M}_\text{DSB}} \right|
\le \frac{\sqrt{2} R_{\max}\gamma}{(1 - \gamma)^2} \cdot \sqrt{\epsilon_m}
+ \frac{2 \sqrt{2} R_{\max}}{(1 - \gamma)^2} \cdot \sqrt{\epsilon_\pi}.
\end{equation}
\end{proof}
This result demonstrates that the proposed BDGxRL framework can effectively learn a target-oriented policy that achieves near-optimal performance in the target environment.
\section{Implementation Details} 
We conduct our experiments on MuJoCo~\citep{todorov2012mujoco} environments from the xTED benchmark~\citep{niu2024xted}, utilizing online interactions in the modified source domain and offline data from D4RL (medium-expert) as the target domain. Each task contains 20,000 offline transitions from demonstration data in the target domain. The three target environments are adapted from standard MuJoCo tasks with the following modifications:
(1) Gravity: doubling the gravitational acceleration to alter the simulation dynamics;
(2) Friction: reducing the friction coefficient to 0.25× or 0.5×, making it harder for the agent to maintain balance;
(3) Thigh Size: increasing thigh size by 2× to introduce morphological discrepancies in embodiment.
For dynamics alignment, we adopt the official implementation of Diffusion Schrödinger Bridge Matching (DSBM) and Iterative Markovian Fitting (IMF) from~\citep{shi2023diffusion} as the backbone of our DSB-based approach. To modulate rewards, we train a three-layer MLP reward model using source domain rollouts. Target-oriented policy learning is performed using SAC, initialized via behavior cloning (BC) on the target dataset. A KL regularization term with coefficient $\alpha = 0.1$ is applied to encourage consistency with the imitation policy. Some experimental environments are also partially adopted from Off-Dynamics Reinforcement Learning (ODRL)~\citep{lyu2024odrl}.
%

%
\end{document}